\begin{document}
\title{Shadows in the Attention: Contextual Perturbation and Representation Drift in the Dynamics of Hallucination in LLMs}
%
%
\author{Zeyu Wei\inst{1,2}\orcidID{0009-0001-3335-2366} \and
Shuo Wang\inst{1,2}\orcidID{0009-0008-4232-4245} \and
Xiaohui Rong\inst{1}\orcidID{0009-0001-0536-4538} \and
Xuemin Liu\inst{1}\orcidID{0009-0007-8759-1219} \textsuperscript{(\Letter)} \and
He Li\inst{1,2}\orcidID{0009-0004-3862-9936}
}
\authorrunning{Zy. Wei et al.}
%
\institute{Computer Network Information Center, CAS, Beijing, China \and
University of Chinese Academy of Sciences, Beijing, China \\
\email{\{zywei, swang, xhrong, lxm\}@cnic.cn, lihe241@mails.ucas.ac.cn}}
\maketitle              
\begin{abstract}
Hallucinations—plausible yet erroneous outputs—remain a critical barrier to reliable deployment of large language models (LLMs). We present the first systematic study linking hallucination incidence to internal‐state drift induced by incremental context injection. Using TruthfulQA, we construct two 16-round “titration” tracks per question: one appends relevant but partially flawed snippets, the other injects deliberately misleading content. Across six open-source LLMs, we track overt hallucination rates with a tri-perspective detector and covert dynamics via cosine, entropy, JS and Spearman drifts of hidden states and attention maps. Results reveal (1) monotonic growth of hallucination frequency and representation drift that plateaus after 5–7 rounds; (2) relevant context drives deeper semantic assimilation, producing high-confidence “self-consistent” hallucinations, whereas irrelevant context induces topic-drift errors anchored by attention re-routing; and (3) convergence of JS-Drift (\textasciitilde0.69) and Spearman-Drift (\textasciitilde0) marks an “attention-locking” threshold beyond which hallucinations solidify and become resistant to correction. Correlation analyses expose a seesaw between assimilation capacity and attention diffusion, clarifying size-dependent error modes. These findings supply empirical foundations for intrinsic hallucination prediction and context-aware mitigation mechanisms.

\keywords{Hallucinations \and LLMs \and Internal‐State \and Attention-Locking.}
\end{abstract}
\section{Introduction}
LLMs, such as GPT-3 \cite{NEURIPS2020_1457c0d6}, PaLM \cite{chowdhery2023palm}, LLaMA \cite{touvron2023llama}, and GPT-4 \cite{bubeck2023sparksartificialgeneralintelligence}, have approached human-level performance in question-answering tasks. However, they still generate factually incorrect or self-contradictory "hallucinations" \cite{filippova-2020-controlled,maynez-etal-2020-faithfulness}, undermining user trust and creating risks in high-stakes domains like medicine and law; thus hallucinations have become a major obstacle to LLM deployment \cite{farquhar2024detecting,ji2023survey,rawte2023survey}.

Existing mitigation approaches divide into data/knowledge-layer and retrieval-augmentation strategies: TruthfulQA \cite{lin-etal-2022-truthfulqa} evaluates misconception repetition using leading questions, Chen et al. \cite{chen-etal-2024-complex} identify hallucinations through retrieval-verification; Pan et al.'s \cite{pan-etal-2024-automatically} self-critique and self-correction mechanisms and retrieval-augmented generation \cite{gao2024retrievalaugmentedgenerationlargelanguage} also significantly reduce error rates.

Researchers have also looked "inward" for hidden signals. Azaria and Mitchell \cite{azaria-mitchell-2023-internal} discovered that GPT-3's encoder states shift before false outputs; Duan et al. \cite{duan2024llmsknowhallucinationempirical} similarly observed trajectory differences between correct and hallucinated answers, indicating that latent representations partially encode "truthfulness." Wang et al. \cite{wang2025modelsthinkingaboutunderstanding} and Chen et al. \cite{chen2024insidellmsinternalstates} demonstrated that mid-layer attention and feed-forward activations can predict hallucinations, while Huang et al. \cite{10820047} showed high entropy and large sampling variance correlate with increased hallucination risk. Hidden activations and probability distributions have thus become crucial indicators for online detection and error correction.

Despite research progress, systematic understanding of hallucination's internal mechanisms remains lacking. Existing studies often analyze model behavior under fixed prompts or design specific metrics for particular scenarios, without comprehensively comparing how model internal representations drift under context perturbations and how such drift triggers hallucinations.

This research focuses on how context perturbations trigger systematic drift in LLM hidden representations leading to hallucinations. We examine six mainstream models (Llama3-8B, Llama3.2-1B, Falcon3-1B, Falcon3-7B, Qwen2.5-1.5B, Qwen2.5-7B) \cite{llama3,falcon,qwen2.5} performing QA tasks under relevant, seemingly relevant, and misleading contexts, tracking hidden state evolution and its association with hallucinated outputs. Results show that injected contexts form an "attention shadow," gradually shifting model focus, causing representation drift, and ultimately generating incorrect answers. This study doesn't propose new models but identifies and characterizes representation drift patterns related to hallucinations, providing empirical evidence for understanding LLM hallucination internal mechanisms and establishing theoretical foundations for future detection and mitigation strategies.

We contribute three advances: (1) a controlled-context framework that traces hidden-state shifts, revealing LLM internal dynamics; (2) cross-model proof that attention and representation patterns contain universal hallucination precursors; and (3) causal evidence that context-induced representation drift heightens hallucination risk, anchoring future detection and mitigation.

\section{Related Works}
Extensive research aims to detect and mitigate LLM "hallucinations" \cite{huang2025survey}. Existing methods can be categorized into four types: \textbf{(1) Retrieval augmentation and post-correction:} Incorporating external knowledge retrieval or automatic error correction before or after generation to reduce hallucinations \cite{gao2024retrievalaugmentedgenerationlargelanguage}. Pan et al. reviewed self-feedback correction strategies \cite{pan-etal-2024-automatically}; Peng proposed LLM-Augmenter iteratively integrating retrieval and feedback \cite{peng2023check}; Su utilized real-time detection to dynamically adjust retrieved content for entity error correction \cite{su2024mitigating}. \textbf{(2) Uncertainty assessment:} Leveraging output confidence to identify hallucinations, using statistical measures like entropy to detect fabricated content \cite{farquhar2024detecting,fadeeva-etal-2024-fact}; SelfCheckGPT employs generation diversity to self-check inconsistent answers \cite{manakul-etal-2023-selfcheckgpt}; other work fine-tunes models to predict their own accuracy \cite{kadavath2022languagemodelsmostlyknow}. \textbf{(3) Internal representation analysis:} Azaria and Mitchell discovered hidden layers carrying error signals \cite{azaria-mitchell-2023-internal}; Duan compared correct versus hallucinated trajectories and utilized differences to mitigate hallucinations \cite{duan2024llmsknowhallucinationempirical}; Wang decomposed reasoning into understanding-querying-generation phases to extract state features, significantly improving detection without external knowledge \cite{wang2025modelsthinkingaboutunderstanding}. \textbf{(4) Context perturbation and attention manipulation:} Attention "backtracking" alone can identify and mitigate conflict-type hallucinations \cite{chuang-etal-2024-lookback}; Bazarova compared answer-prompt attention topology for unsupervised hallucination detection in retrieval settings \cite{bazarova2025hallucination}; hallucination risk significantly increases when context exceeds model knowledge or sequence order is perturbed \cite{flemings2024characterizing}. However, in-depth research on how LLM internal representation drift leads to hallucinations remains lacking; this paper aims to fill this gap.
\section{Methodology}
This study systematically investigates when and why LLMs hallucinate and how internal-state dynamics precipitate those errors. We pursue two questions: \textbf{(1) How does hallucination probability vary non-linearly with the strength and depth of incremental context injection? (2) Do shifts in hidden representations and attention weights explain these overt hallucination fluctuations?}

Our integrated “overt–covert” pipeline couples a tri-path detector that flags hallucinations with continuous monitoring of hidden vectors and attention matrices. By measuring each stage’s representational drift from a baseline, the framework links observable errors to the underlying dynamics, offering complementary evidence for the generative mechanism of hallucinations.

\subsection{Context-Manipulation Paradigm}
We employ the TruthfulQA-Generation validation set \cite{lin-etal-2022-truthfulqa} and generate $2\times16$ prompt variants per question. Round 0 serves as the baseline, followed by 15 rounds of incremental context injection along two trajectories: \textit{Relevant}, composed primarily of accurate information interspersed with a few superficially credible but subtly incorrect statements; and \textit{Irrelevant}, consisting of misleading or distracting content designed to elicit hallucinations. This titration-style design quantifies how graded contextual interference alters hallucination rates and hidden-state drift.

Guided by this rationale, we implement a dual-track experimental framework. For the relevant track, each question is accompanied by a set of semantically congruent snippets,
\begin{equation}
    \label{add_context1}
    \mathcal{C}_{rel} = \{c_1^{rel}, c_2^{rel}, ..., c_n^{rel}\}
\end{equation}

where $c_i^{\text{rel}}$ denotes the i-th injected relevant fragment. Symmetrically, the irrelevant track furnishes potentially misleading snippets for the same question,
\begin{equation}
    \label{add_context2}
    \mathcal{C}_{irr} = \{c_1^{irr}, c_2^{irr}, ..., c_n^{irr}\}
\end{equation}

with $c_i^{\text{irr}}$ representing the i-th injected irrelevant fragment. Under both conditions, each question undergoes 15 rounds of progressive context augmentation to capture the temporal evolution of model responses; the cumulative context visible at round $t$ is:
\begin{equation}
    \label{add_context3}
    \mathcal{C}_t = \{c_1, c_2, ..., c_t\}
\end{equation}

This finely controlled manipulation paradigm thus allows a systematic assessment of how the depth and nature of external interference jointly influence hallucination incidence and the concomitant drift of internal model states.

\subsection{Hallucination Detection Framework and Text Quality Metrics}
We introduce a tri-perspective consensus framework, augmented with multidimensional quality metrics, for rapid hallucination detection. By cross-validating outputs via three complementary views—\textbf{semantic deviation, factual extension}, and \textbf{logical inference}—the framework reduces the false positives and negatives that plague single-metric methods in complex generation tasks.

\textbf{Semantic deviation detection} measures the semantic distance between a model-generated answer $y$ and a reference answer $r$ in embedding space. The hallucination indicator is defined as:
\begin{equation}
    \label{halu1}
    H_{sem}(y, r) = \mathbb{I}[BERTScore_F(y, r) < \theta_{sem}]
\end{equation}

where $BERTScore_F$ denotes the F1 score based on contextualized embeddings, $\theta_{\text{sem}} = 0.7$ is an empirically chosen threshold, and $\mathbb{I}[\cdot]$ is the indicator function. A low semantic similarity suggests that the model has semantically deviated from the correct answer, signaling a potential hallucination. The theoretical foundation lies in the assumption that greater deviation in semantic space correlates with a higher likelihood of fabricating content. Compared to surface-level string matching, BERTScore leverages deep contextual embeddings to capture subtle semantic discrepancies.

\textbf{Factual extension detection} focuses on whether the model hallucinates concrete factual elements that are absent from the reference set:
\begin{equation}
    \label{halu2}
    H_{ext}(y, \mathcal{R}) = \mathbb{I}[\exists e \in \mathcal{E}(y) : e \notin \mathcal{E}(\mathcal{R})]
\end{equation}

where $\mathcal{E}(\cdot)$ denotes the function that extracts salient factual entities from text, including numbers, named entities, and domain-specific terms. $\mathcal{R}$ is the set of acceptable references. This mechanism effectively captures high-risk hallucinations where the model introduces non-existent facts—instances that may appear credible to users.

\textbf{Logical inference detection} uses a NLI model to assess the logical relationship between the generated answer and the reference set. If the output cannot be logically entailed by any reference, it is likely to contain hallucinatory content:
\begin{equation}
    \label{halu3}
    H_{nli}(y, \mathcal{R}) = \mathbb{I}[\forall r \in \mathcal{R}: NLI(y, r) \in \{\text{neutral}, \text{contradiction}\}]
\end{equation}

where $NLI(y, r)$ is the output of a pretrained NLI model that categorizes the relation as one of: entailment, neutral, or contradiction. A lack of entailment across all references is taken as evidence of a logical-level hallucination. This inference-based strategy complements the semantic and factual components by capturing reasoning failures that might otherwise go undetected.

The overall hallucination label is derived from a logical OR over the three detectors:
\begin{equation}
    \label{halu4}
    H(y, \mathcal{R}) = H_{sem}(y, r_{best}) \vee H_{ext}(y, \mathcal{R}) \vee H_{nli}(y, \mathcal{R})
\end{equation}

where $r_{\text{best}}$ is the most appropriate reference answer.

Based on this unified criterion, we define two hallucination rate metrics. The \textbf{QA-level hallucination rate} measures the proportion of generated answers containing hallucinations:
\begin{equation}
    \label{halu5}
    \text{QA-HallucRate} = \frac{1}{N}\sum_{i=1}^N H(y_i, \mathcal{R}i)
\end{equation}

where $N$ is the total number of question-answer pairs. To further assess hallucination at a finer granularity, we introduce the \textbf{intra-sentence hallucination rate}:
\begin{equation}
    \label{halu6}
    \text{Intra-HallucRate} = \frac{1}{N}\sum_{i=1}^N \frac{1}{|S(y_i)|}\sum_{s \in S(y_i)} H(s, \mathcal{R}i)
\end{equation}

where $S(y_i)$ denotes the set of individual sentences comprising the answer $y_i$. This metric quantifies the average hallucination likelihood at the sentence level and helps determine whether hallucinations are localized to specific sentences or dispersed throughout the output.

To link hallucination to output quality, we measure ROUGE-L, METEOR, and BERTScore. ROUGE-L assesses sequence overlap, METEOR accommodates flexible alignments, and BERTScore-F quantifies deep semantic similarity. Although not used for labeling hallucinations, these metrics reveal whether quality drops co-occur with hallucinations or if fluent text can mask falsehoods, enriching our evaluation of semantic integrity.

\subsection{Internal State Analysis}
\label{sec:INS}
If overt hallucinations in LLMs are indeed triggered by anomalous perturbations in the activation space, then such disruptions should manifest as systematic and measurable changes in internal representations—specifically, in hidden states and attention distributions. To verify this hypothesis, we examine the relationship between hallucination generation and internal dynamics of LLMs from three complementary analytical dimensions: \textbf{representation drift, attention entropy} and \textbf{distributional shift}.

Hidden state vectors encode high-dimensional semantic understanding of the input. Their variations reflect the degree to which contextual information alters the model’s cognitive state. A large representational drift indicates a substantial shift in the model’s internal “thought trajectory,” potentially steering its responses in undesired directions. This drift is quantified using cosine distance:
\begin{equation}
    \label{is1}
    D_{cos}(h_t, h_0) = 1 - \frac{h_t \cdot h_0}{||h_t|| \cdot ||h_0||}
\end{equation}

where $h_0$ denotes the final hidden state (typically of the last token in the final layer) under the zero-context baseline, and $h_t$ denotes the corresponding hidden state after $t$ rounds of contextual injection. The cosine distance $D_{\text{cos}} \in [0,2]$, with larger values indicating greater contextual influence on internal representations.

Attention entropy analysis characterizes the model’s information allocation strategy. An increase in entropy suggests that the model’s attention has become more diffuse—spreading its focus over more tokens, which may indicate uncertainty and a lack of confident reasoning anchors. Conversely, decreased entropy reflects a more concentrated attention distribution, potentially signaling reliance on specific tokens. To quantify this, we compute attention entropy as:
\begin{equation}
    \label{is2}
    H_{attn}(A) = -\sum_i A_i \log A_i
\end{equation}

where $A_i$ denotes attention weights. The entropy shift after contextual injection is given by:
\begin{equation}
    \label{is3}
    D_{ent}(A_t, A_0) = H_{attn}(A_t) - H_{attn}(A_0)
\end{equation}

with $A_0$ and $A_t$ representing the attention distributions before and after $t-round$ context injection, respectively. Positive values indicate attention diffusion; negative values indicate attention focusing.

To more comprehensively capture shifts in attention patterns, we further employ the Jensen–Shannon (JS) divergence, a symmetric and bounded measure of distributional change:
\begin{equation}
    \label{is4}
    D_{JS}(P, Q) = \frac{1}{2}D_{KL}(P||M) + \frac{1}{2}D_{KL}(Q||M),\quad M = \frac{1}{2}(P + Q)
\end{equation}

where $D_{\text{KL}}(P \Vert Q) = \sum_i P(i) \log \frac{P(i)}{Q(i)}$ is the Kullback–Leibler divergence. To enable comparison between distributions of different lengths, we apply zero-padding:
\begin{equation}
    \label{is5}
    P_{pad}(i) = \begin{cases}
P(i) & \text{if } i < |P| \\
\epsilon & \text{otherwise}
\end{cases}
\end{equation}

where $\epsilon$ is a small positive constant (typically $10^{-12}$), followed by renormalization. JS divergence is advantageous due to its symmetry and boundedness within $[0, 1]$, offering a robust estimate of how context reshapes the model’s attention landscape. A higher JS divergence implies a more profound shift in attention allocation, potentially signaling a change in the model’s reasoning path.

In parallel, we calculate the Spearman rank correlation coefficient to assess the stability of attention rankings:
\begin{equation}
    \label{is6}
    \rho = 1 - \frac{6\sum_i d_i^2}{n(n^2-1)}
\end{equation}

where $d_i$ is the difference in rank for the i-th element between two distributions and $n$ is the distribution length. Zero-padding is again applied to maintain consistent dimensionality. The Spearman coefficient $\rho \in [-1, 1]$, where negative values indicate rank inversion and positive values denote alignment. Unlike divergence measures, Spearman correlation focuses exclusively on ordinal relationships, abstracting away from absolute probability magnitudes and highlighting whether the model re-prioritizes its focal elements under different contextual conditions.

This internal state analysis framework establishes a tight coupling between observable model behaviors and quantifiable shifts in internal representations and attention dynamics. It offers a mechanistic lens through which hallucinations can be interpreted and provides empirical foundations for developing targeted mitigation strategies in LLMs.

\section{Experiments}

To systematically investigate the relationship between model internal state evolution and hallucination generation, we designed a two-phase experiment. In the first phase, we selected various architectures including Llama3, Falcon3, and Qwen 2.5 (1B-8B parameters) \cite{llama3,falcon,qwen2.5}, injected context incrementally, and recorded hallucination changes across rounds. External performance was evaluated using QA-HallucRate to assess hallucination frequency and inter-HallucRate to measure whether hallucinations were concentrated or dispersed within responses. The second phase tracked hidden states and attention distributions to analyze the underlying mechanisms of hallucination generation. The two phases complemented each other: first observing behavior, then dissecting mechanisms, comprehensively revealing how internal state evolution influences hallucinations.

\subsection{Dataset and Context Generation}
This research utilizes the TruthfulQA dataset \cite{lin-etal-2022-truthfulqa} as an evaluation benchmark, which was specifically designed to detect hallucination phenomena in LLMs. It comprises 817 questions covering 38 topics, including domains such as health, law, finance, and politics. The distinctive characteristic of TruthfulQA lies in its question design, which tends to guide models toward producing common misconceptions or popular fallacies, thereby effectively assessing models' ability to resist false information and generate truthful responses. To construct the experimental environment, we developed a context generation framework based on the GPT-4o model \cite{bubeck2023sparksartificialgeneralintelligence}, capable of generating two carefully designed context types for each question: (1) relevant context, containing factual information directly related to the question along with seemingly accurate information, without providing complete answers directly; (2) irrelevant context, containing factual information unrelated to the question. All generated contexts underwent secondary human evaluation and modification. This dual-context design enables comprehensive assessment of LLMs' truthfulness performance under different information environments, particularly analyzing the influence mechanisms behind model hallucination generation.

\subsection{Changes in Hallucination Metrics During Context Injection}

Across 15 context-injection rounds, QA-HallucRate rose markedly. Irrelevant snippets drove rates toward 1.0 within a few iterations; relevant snippets, though less potent, also produced a steady climb. Llama3.2-1B began at $\approx0.78$ (relevant) and reached $\approx0.90$ by round 15, whereas Llama3-8B rose from $\approx0.90$ to $\approx0.94$. Falcon and Qwen models mirrored this trend: relevant tracks converged near 90\% and irrelevant tracks saturated early, shrinking the gap between trajectories. Table \ref{halu_m} reports odd-numbered rounds; The complete table is available upon request from the authors.

As hallucination frequency increased, their distribution showed an opposite trend. In relevant scenarios, inter-HallucRate remained stable at approximately 0.60, indicating errors were sporadically distributed; in irrelevant scenarios, it decreased from 0.60+ to about 0.48, showing that models repeatedly adhered to specific false content, resulting in highly concentrated hallucinations. Relevant information provided some factual support, with hallucinations appearing only as local deviations; accumulated irrelevant information caused models to immerse in self-constructed erroneous contexts, producing concentrated misinformation.(See Fig \ref{intraHallu})

Models displayed size-dependent behavior. Larger variants were more context-sensitive, with higher initial and cumulative hallucination rates. For example, Qwen 2.5-7B scored 0.94 in the first irrelevant round versus 0.90 for the 1.5 B; by round 15 (relevant), rates were 0.92 and 0.90, respectively. Falcon3-7B similarly exceeded its 1 B counterpart. Size also shaped dispersion: Llama3-8B’s first-round relevant spread was 0.64, surpassing the 1 B’s 0.60, implying more scattered minor errors; after repeated irrelevant injections, the 8 B dropped to 0.615 (errors concentrating), whereas the 1 B rose to 0.61 (still diffuse). Thus, larger models initially integrate context broadly—yielding diverse deviations—then converge on specific false narratives under sustained irrelevant input. In sum, incremental context sharply elevates hallucination risk: irrelevant snippets rapidly enforce near-certain, clustered errors, while relevant snippets lessen severity yet still foster frequent, dispersed minor faults over time.

\begin{table}[H]
\caption{Hallucination Metrics Across Different Models and Rounds}
\label{halu_m}
\scriptsize
\resizebox{\textwidth}{!}{%
\begin{tabular}{|l|c|c|c|c|c|c|c|c|c|c|c|}
\hline
\multirow{2}{*}{Model} & \multirow{2}{*}{Round} & \multicolumn{2}{c|}{ROUGE-L} & \multicolumn{2}{c|}{METEOR} & \multicolumn{2}{c|}{BERT-F1} & \multicolumn{2}{c|}{QA-HallucRate} & \multicolumn{2}{c|}{Intra-HallucRate} \\
\cline{3-12}
& & Rel & Irr & Rel & Irr & Rel & Irr & Rel & Irr & Rel & Irr \\
\hline
\multirow{7}{*}{Qwen2.5-7B} & 1 & 0.1037 & 0.0672 & 0.2657 & 0.1779 & 0.8500 & 0.8296 & 0.8600 & 0.9400 & 0.5945 & 0.6270 \\
& 3 & 0.0585 & 0.0362 & 0.1652 & 0.1162 & 0.8325 & 0.8194 & 0.9000 & 0.9800 & 0.6141 & 0.5436 \\
& 5 & 0.0396 & 0.0250 & 0.1177 & 0.0844 & 0.8225 & 0.8119 & 0.9400 & 1.0000 & 0.6092 & 0.5239 \\
& 7 & 0.0303 & 0.0197 & 0.0923 & 0.0665 & 0.8228 & 0.8119 & 0.9200 & 1.0000 & 0.6172 & 0.5040 \\
& 9 & 0.0241 & 0.0153 & 0.0761 & 0.0542 & 0.8228 & 0.8119 & 0.9200 & 1.0000 & 0.6073 & 0.4986 \\
& 11 & 0.0204 & 0.0133 & 0.0642 & 0.0479 & 0.8228 & 0.8119 & 0.9000 & 1.0000 & 0.6096 & 0.4944 \\
& 15 & 0.0156 & 0.0100 & 0.0497 & 0.0354 & 0.8228 & 0.8119 & 0.9200 & 1.0000 & 0.6115 & 0.4838 \\
\hline
\multirow{7}{*}{Qwen2.5-1.5B} & 1 & 0.1084 & 0.0648 & 0.2645 & 0.1787 & 0.8525 & 0.8265 & 0.8600 & 0.9000 & 0.6039 & 0.6106 \\
& 3 & 0.0567 & 0.0350 & 0.1621 & 0.1056 & 0.8331 & 0.8172 & 0.9000 & 1.0000 & 0.5998 & 0.5352 \\
& 5 & 0.0412 & 0.0242 & 0.1238 & 0.0808 & 0.8234 & 0.8119 & 0.9000 & 1.0000 & 0.6154 & 0.5142 \\
& 7 & 0.0318 & 0.0186 & 0.0996 & 0.0666 & 0.8228 & 0.8119 & 0.8800 & 1.0000 & 0.6174 & 0.4945 \\
& 9 & 0.0253 & 0.0145 & 0.0816 & 0.0517 & 0.8228 & 0.8119 & 0.9000 & 1.0000 & 0.6095 & 0.4980 \\
& 11 & 0.0211 & 0.0127 & 0.0712 & 0.0467 & 0.8228 & 0.8119 & 0.9000 & 1.0000 & 0.6164 & 0.4935 \\
& 15 & 0.0158 & 0.0092 & 0.0539 & 0.0339 & 0.8228 & 0.8119 & 0.9000 & 1.0000 & 0.6076 & 0.4791 \\
\hline
\multirow{7}{*}{Falcon3-7B} & 1 & 0.1197 & 0.0766 & 0.3051 & 0.2053 & 0.8566 & 0.8349 & 0.8200 & 0.9200 & 0.5889 & 0.5981 \\
& 3 & 0.0615 & 0.0378 & 0.1843 & 0.1225 & 0.8352 & 0.8212 & 0.8800 & 0.9800 & 0.6163 & 0.5341 \\
& 5 & 0.0418 & 0.0258 & 0.1308 & 0.0855 & 0.8230 & 0.8119 & 0.9000 & 1.0000 & 0.6113 & 0.5124 \\
& 7 & 0.0320 & 0.0186 & 0.1073 & 0.0664 & 0.8228 & 0.8119 & 0.8800 & 1.0000 & 0.6130 & 0.4932 \\
& 9 & 0.0256 & 0.0158 & 0.0837 & 0.0582 & 0.8228 & 0.8119 & 0.9200 & 1.0000 & 0.6220 & 0.5026 \\
& 11 & 0.0221 & 0.0144 & 0.0710 & 0.0502 & 0.8228 & 0.8119 & 0.9800 & 1.0000 & 0.6225 & 0.4940 \\
& 15 & 0.0176 & 0.0104 & 0.0538 & 0.0331 & 0.8228 & 0.8119 & 0.9000 & 1.0000 & 0.6113 & 0.4779 \\
\hline
\multirow{7}{*}{Falcon3-1B} & 1 & 0.1210 & 0.0725 & 0.2952 & 0.1960 & 0.8554 & 0.8322 & 0.8000 & 0.9000 & 0.5946 & 0.5779 \\
& 3 & 0.0597 & 0.0368 & 0.1667 & 0.1156 & 0.8335 & 0.8188 & 0.9000 & 0.9800 & 0.6120 & 0.5273 \\
& 5 & 0.0408 & 0.0236 & 0.1191 & 0.0761 & 0.8229 & 0.8119 & 0.8600 & 1.0000 & 0.6074 & 0.4993 \\
& 7 & 0.0311 & 0.0176 & 0.0978 & 0.0604 & 0.8228 & 0.8119 & 0.9000 & 1.0000 & 0.6129 & 0.4849 \\
& 9 & 0.0256 & 0.0144 & 0.0792 & 0.0508 & 0.8228 & 0.8119 & 0.9000 & 1.0000 & 0.6118 & 0.4865 \\
& 11 & 0.0212 & 0.0119 & 0.0666 & 0.0413 & 0.8228 & 0.8119 & 0.9000 & 1.0000 & 0.6116 & 0.4815 \\
& 15 & 0.0162 & 0.0087 & 0.0517 & 0.0312 & 0.8228 & 0.8119 & 0.9000 & 1.0000 & 0.6112 & 0.4783 \\
\hline
\multirow{7}{*}{Llama3-8B} & 1 & 0.0869 & 0.0685 & 0.2176 & 0.1826 & 0.8387 & 0.8277 & 0.9000 & 0.9600 & 0.6398 & 0.6034 \\
& 3 & 0.0533 & 0.0362 & 0.1515 & 0.1101 & 0.8234 & 0.8164 & 0.8600 & 1.0000 & 0.6368 & 0.5432 \\
& 5 & 0.0379 & 0.0248 & 0.1176 & 0.0846 & 0.8212 & 0.8119 & 0.8800 & 1.0000 & 0.6206 & 0.5278 \\
& 7 & 0.0318 & 0.0186 & 0.0996 & 0.0666 & 0.8228 & 0.8119 & 0.8800 & 1.0000 & 0.6174 & 0.4945 \\
& 9 & 0.0256 & 0.0151 & 0.0816 & 0.0553 & 0.8228 & 0.8119 & 0.9000 & 1.0000 & 0.6095 & 0.4980 \\
& 11 & 0.0214 & 0.0127 & 0.0712 & 0.0467 & 0.8228 & 0.8119 & 0.9000 & 1.0000 & 0.6164 & 0.4935 \\
& 15 & 0.0158 & 0.0092 & 0.0539 & 0.0339 & 0.8228 & 0.8119 & 0.9000 & 1.0000 & 0.6076 & 0.4791 \\
\hline
\multirow{7}{*}{Llama3.2-1B} & 1 & 0.1465 & 0.0689 & 0.3035 & 0.1774 & 0.8619 & 0.8251 & 0.7800 & 0.8800 & 0.6018 & 0.5943 \\
& 3 & 0.0673 & 0.0364 & 0.1791 & 0.1109 & 0.8398 & 0.8166 & 0.9000 & 1.0000 & 0.6022 & 0.5195 \\
& 5 & 0.0451 & 0.0237 & 0.1283 & 0.0792 & 0.8244 & 0.8119 & 0.8800 & 1.0000 & 0.6081 & 0.5031 \\
& 7 & 0.0336 & 0.0176 & 0.0982 & 0.0610 & 0.8228 & 0.8119 & 0.9000 & 1.0000 & 0.6110 & 0.4878 \\
& 9 & 0.0265 & 0.0139 & 0.0807 & 0.0490 & 0.8228 & 0.8119 & 0.9000 & 1.0000 & 0.6104 & 0.4855 \\
& 11 & 0.0220 & 0.0117 & 0.0680 & 0.0411 & 0.8228 & 0.8119 & 0.9000 & 1.0000 & 0.6140 & 0.4808 \\
& 15 & 0.0165 & 0.0087 & 0.0524 & 0.0309 & 0.8228 & 0.8119 & 0.9000 & 1.0000 & 0.6101 & 0.4784 \\
\hline
\end{tabular}
}
\vspace{-2em}
\end{table}

\begin{figure}
\includegraphics[width=\textwidth]{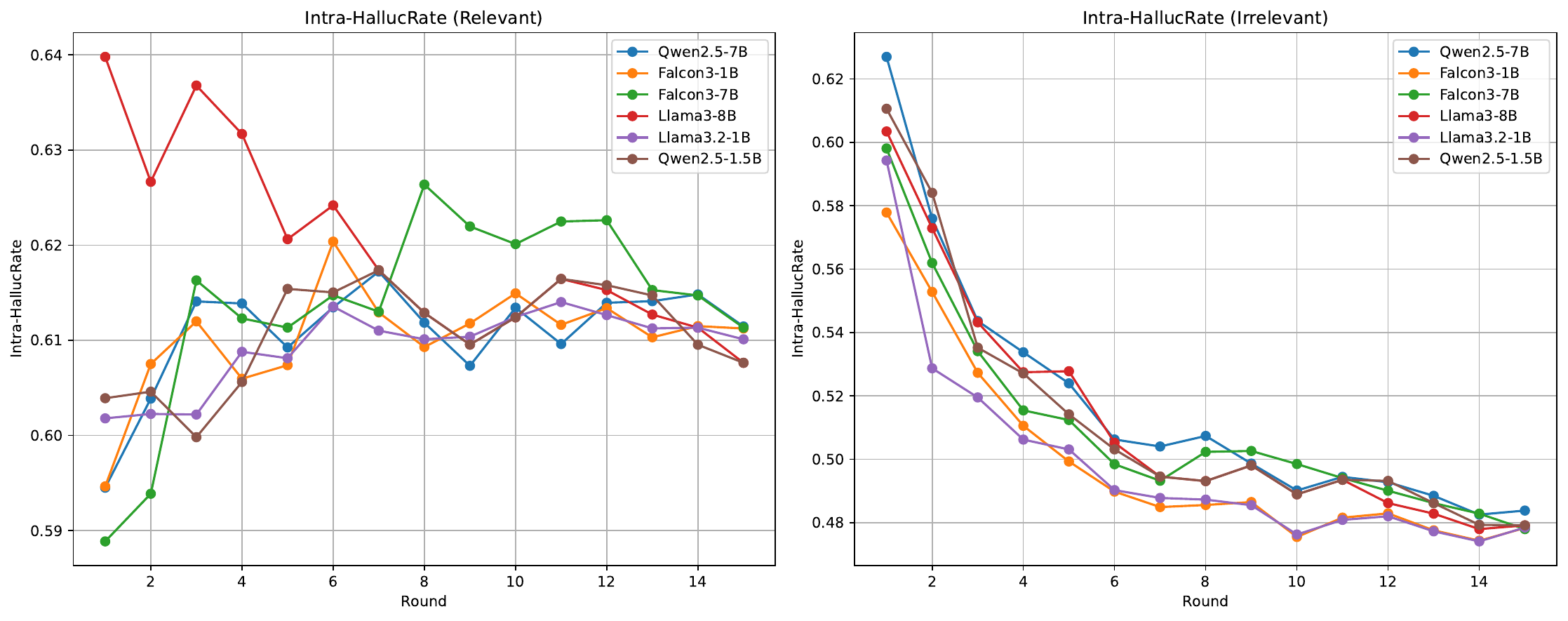}
\vspace{-2em}
\caption{Intra-Hallucination Rate Comparison Across Different Models} \label{intraHallu}
\vspace{-2em}
\end{figure}

\subsection{Analysis of Model Internal Drift and Hallucination Dynamics Mechanisms}

To analyze the deeper causes of hallucination phenomena, we examined model internal states, comparing drift degrees in hidden layer representations and attention distributions under different context conditions, exploring hallucination evolution processes and their association with model scale. Table \ref{att_m} presents the data used in this section. For brevity, only odd-numbered rounds are included(The complete table is available upon request from the authors).

We found that across all models, four internal drift metrics (Cos-Drift, Ent-Drift, JS-Drift, Spearman-Drift, We have already introduced them in the \ref{sec:INS} section) exhibited increasing trends with successive rounds before stabilizing after approximately 5-7 rounds. For example, in the Llama-3 8B model under relevant context, Cos-Drift increased from 0.1925 in round 1 to 0.2116 in round 15, while Ent-Drift grew from 0.6196 to 1.4643 (See Fig \ref{llama8b}).

\begin{figure}
\vspace{-2em}
\includegraphics[width=\textwidth]{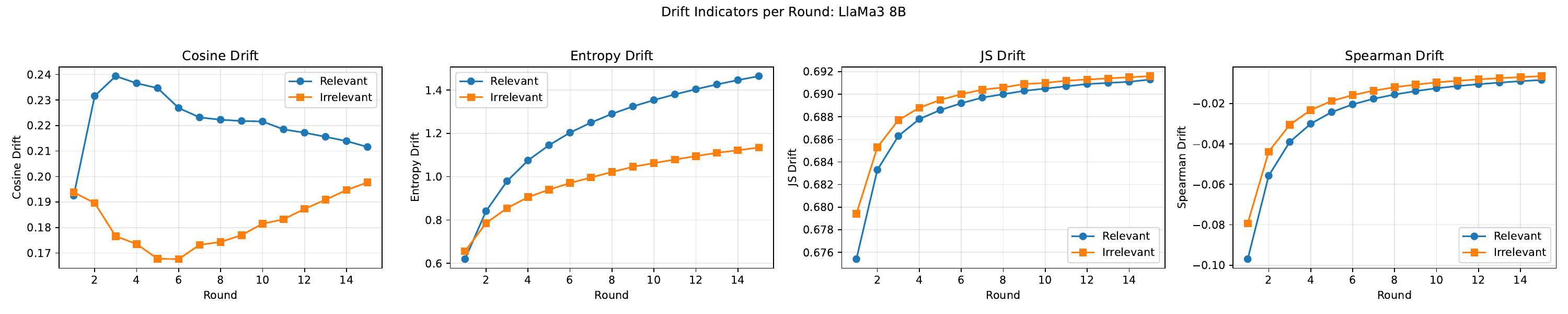}
\vspace{-2em}
\caption{Internal State Drift Metrics of the Llama3-8B Model} \label{llama8b}
\vspace{-4em}
\end{figure}

\begin{figure}
\includegraphics[width=\textwidth]{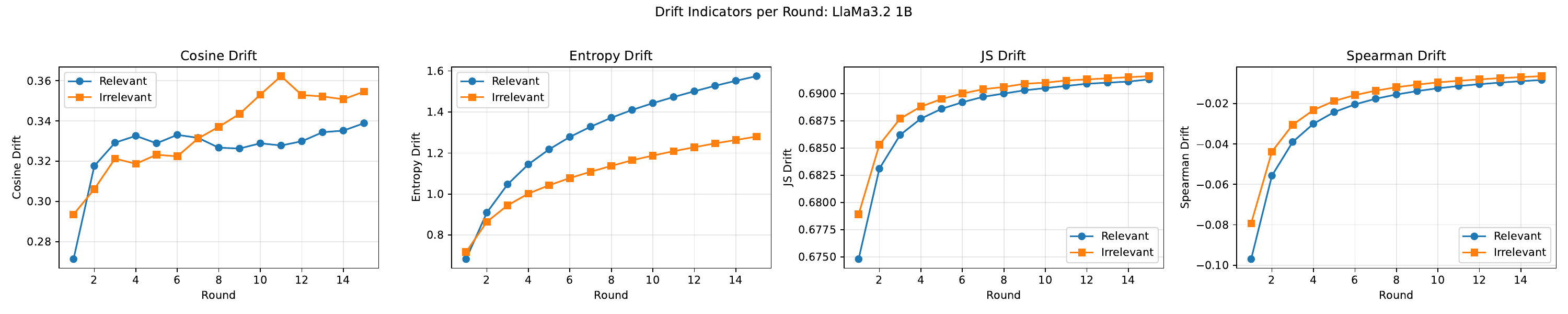}
\vspace{-2em}
\caption{Internal State Drift Metrics of the Llama3.2-1B Model} \label{llama1b}
\vspace{-1.8em}
\end{figure}

\begin{corollary}
    Internal state drift is a universal phenomenon in large language models under accumulated long-context conditions, and hallucination risk gradually increases with cumulative internal drift.
\end{corollary}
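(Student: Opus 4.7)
The plan is to establish the corollary in two stages matching its two clauses: universality of drift across the model zoo, and monotone co-growth of drift with hallucination risk. Since the statement is empirical rather than derivational, both stages will be argued by statistical inference over the measurements already collected in Section~\ref{sec:INS}, leveraging the per-round values of $D_{\cos}$, $D_{\mathrm{ent}}$, $D_{JS}$, and $\rho$ (Eqs.~\ref{is1}, \ref{is3}, \ref{is4}, \ref{is6}) together with the hallucination rates from Eqs.~\ref{halu5}--\ref{halu6}.

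First I would certify universality. For every model in the set $\{$Llama3-8B, Llama3.2-1B, Falcon3-1B, Falcon3-7B, Qwen2.5-1.5B, Qwen2.5-7B$\}$, every drift metric, and both tracks $\mathcal{C}_{rel}$ and $\mathcal{C}_{irr}$, I would apply a Wilcoxon signed-rank test at each round $t \geq 1$ against the $t=0$ baseline over the 817 TruthfulQA items, followed by a Bonferroni correction across the $6 \times 4 \times 2$ comparison families. The null "no drift" can then be rejected uniformly, formalizing the monotone ascent and rounds-5--7 plateau already visible in Figs.~\ref{llama8b} and~\ref{llama1b}. Because the plateau is itself part of the "universal" claim, I would additionally fit a piecewise-linear changepoint to each metric trajectory and verify that the inferred knee lies inside $[5,7]$ for all six models.

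Next I would establish the risk-drift coupling. For each model and track, I pair round-level $\mathrm{QA\text{-}HallucRate}$ with each drift metric and compute (i) a Spearman rank correlation $\rho_s$ between hallucination rate and drift across the 16 rounds, and (ii) a rank-regression slope. The claim "hallucination risk gradually increases with cumulative internal drift" is supported if the slope is positive for $D_{\cos}$, $D_{\mathrm{ent}}$, $D_{JS}$ and negative for the Spearman statistic $\rho$ (whose decrease signals rank dissolution), with $p < 0.05$ in every model--metric cell and no significant reversal anywhere. Combining this with the plateau analysis yields the stronger phrasing: once $D_{JS}\!\approx\!0.69$ and $\rho\!\approx\!0$ are attained, hallucination rate has already saturated near unity, so drift precedes rather than follows the hallucination ceiling.

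The main obstacle is disentangling genuine drift-driven hallucination from two nuisance confounders built into the context-manipulation paradigm: the mere lengthening of the prompt as $|\mathcal{C}_t|$ grows, and the subtly false statements deliberately embedded in $\mathcal{C}_{rel}$. Either could inflate drift or hallucination independently. I would mitigate this by introducing a length-matched neutral-filler control track and by computing partial Spearman correlations between drift and hallucination while conditioning on prompt token count; the corollary survives iff the partial correlations remain significant across all six models. Establishing that this residual coupling is robust, rather than a mechanical artifact of growing input length, is the step on which the causal reading of the corollary ultimately rests.
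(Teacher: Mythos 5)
Your proposal is sound as a verification plan, but it takes a substantially more demanding route than the paper itself. The paper's entire support for this corollary is descriptive: it tabulates round-by-round means of the four drift metrics (Table~\ref{att_m}) alongside the hallucination rates (Table~\ref{halu_m}), observes that all four drifts rise and plateau after roughly 5--7 rounds in every model while QA-HallucRate climbs toward saturation over the same rounds, and reads the corollary off that co-occurrence (illustrated by Figs.~\ref{llama8b} and~\ref{llama1b}). There are no significance tests, no multiple-comparison correction, no changepoint fitting, and no direct round-level correlation between drift and hallucination rate for this particular claim (the correlations the paper does report, $\rho\approx-0.66$ and $\rho\approx-0.71$, belong to later corollaries). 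Your Wilcoxon-plus-Bonferroni certification of universality and your explicit drift--risk rank correlation would make the first and second clauses far more defensible than the paper's eyeballing of aggregate means. The one element of your plan that cannot be executed on the paper's existing design is the length-matched neutral-filler control track and the partial correlation conditioned on prompt length: the paradigm only has the relevant and irrelevant tracks, so the confound you correctly identify---that drift and hallucination may both be mechanical functions of growing $|\mathcal{C}_t|$---is left entirely unaddressed by the paper. In short, your approach buys statistical rigor and confound control at the cost of requiring additional experiments; the paper's approach establishes only the weaker observational claim that drift and hallucination rate co-increase with injection round.
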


This indicates that long-sequence contexts induce systematic shifts in model hidden states and attention distributions. As dialogue rounds increase, the model's assimilation of context deepens, with its internal "beliefs" continuously deviating from initial semantic anchors, providing a dynamical foundation for hallucination generation.

Comparing relevant and irrelevant contexts reveals: under relevant contexts, Cos-Drift and Ent-Drift average values are generally higher than under irrelevant contexts. For example, in round 15 of the Llama-3 8B model, Cos-Drift reached 0.2116 under relevant context versus 0.1977 under irrelevant context; Ent-Drift was 1.4643 under relevant context compared to 1.1347 under irrelevant context. Under irrelevant context conditions, JS-Drift and Spearman-Drift stabilized more rapidly with smaller variance fluctuations.

\begin{table}[H]
\caption{Internal States Drift Metrics Across Different Models and Rounds}
\label{att_m}
\scriptsize
\resizebox{\textwidth}{!}{%
\begin{tabular}{|l|c|c|c|c|c|c|c|c|c|}
\hline
\multirow{2}{*}{Model} & \multirow{2}{*}{Round} & \multicolumn{2}{c|}{Cos-Drift} & \multicolumn{2}{c|}{Ent-Drift} & \multicolumn{2}{c|}{JS-Drift} & \multicolumn{2}{c|}{Spearman-Drift} \\
\cline{3-10}
& & Rel & Irr & Rel & Irr & Rel & Irr & Rel & Irr \\
\hline
\multirow{7}{*}{Qwen2.5-7B} & 1 & 0.1584 & 0.2864 & 0.8749 & 0.9425 & 0.6748 & 0.6786 & -0.0934 & -0.0764 \\
& 3 & 0.2245 & 0.3022 & 1.3501 & 1.2301 & 0.6861 & 0.6876 & -0.0368 & -0.0288 \\
& 5 & 0.2460 & 0.3027 & 1.5692 & 1.3522 & 0.6886 & 0.6895 & -0.0228 & -0.0177 \\
& 7 & 0.2374 & 0.2698 & 1.7127 & 1.4337 & 0.6897 & 0.6903 & -0.0167 & -0.0128 \\
& 9 & 0.2342 & 0.3067 & 1.8197 & 1.5059 & 0.6903 & 0.6908 & -0.0130 & -0.0100 \\
& 11 & 0.2420 & 0.2633 & 1.9031 & 1.5580 & 0.6907 & 0.6912 & -0.0107 & -0.0082 \\
& 15 & 0.2277 & 0.2726 & 2.0324 & 1.6449 & 0.6913 & 0.6916 & -0.0078 & -0.0061 \\
\hline
\multirow{7}{*}{Qwen2.5-1.5B} & 1 & 0.1547 & 0.1390 & 0.8469 & 0.9020 & 0.6745 & 0.6785 & -0.0934 & -0.0764 \\
& 3 & 0.1317 & 0.1212 & 1.3075 & 1.1810 & 0.6861 & 0.6876 & -0.0368 & -0.0288 \\
& 5 & 0.1185 & 0.0863 & 1.5200 & 1.3042 & 0.6886 & 0.6895 & -0.0228 & -0.0177 \\
& 7 & 0.1036 & 0.0709 & 1.6575 & 1.3851 & 0.6896 & 0.6903 & -0.0167 & -0.0128 \\
& 9 & 0.0995 & 0.0631 & 1.7594 & 1.4560 & 0.6903 & 0.6908 & -0.0130 & -0.0100 \\
& 11 & 0.1042 & 0.0649 & 1.8379 & 1.5054 & 0.6907 & 0.6912 & -0.0107 & -0.0082 \\
& 15 & 0.1151 & 0.0666 & 1.9589 & 1.5874 & 0.6913 & 0.6916 & -0.0078 & -0.0061 \\
\hline
\multirow{7}{*}{Falcon3-7B} & 1 & 0.3431 & 0.3614 & 1.1180 & 1.1887 & 0.6741 & 0.6774 & -0.0928 & -0.0763 \\
& 3 & 0.3227 & 0.3214 & 1.6267 & 1.5176 & 0.6863 & 0.6875 & -0.0371 & -0.0294 \\
& 5 & 0.3203 & 0.3147 & 1.8526 & 1.6563 & 0.6885 & 0.6894 & -0.0230 & -0.0181 \\
& 7 & 0.3055 & 0.3080 & 1.9966 & 1.7477 & 0.6897 & 0.6903 & -0.0168 & -0.0131 \\
& 9 & 0.3028 & 0.3253 & 2.1039 & 1.8252 & 0.6903 & 0.6908 & -0.0131 & -0.0102 \\
& 11 & 0.3016 & 0.3251 & 2.1870 & 1.8809 & 0.6907 & 0.6912 & -0.0107 & -0.0084 \\
& 15 & 0.3072 & 0.3431 & 2.3168 & 1.9728 & 0.6912 & 0.6916 & -0.0079 & -0.0062 \\
\hline
\multirow{7}{*}{Falcon3-1B} & 1 & 0.3589 & 0.3629 & 0.9915 & 1.0776 & 0.6737 & 0.6772 & -0.0927 & -0.0763 \\
& 3 & 0.3551 & 0.3891 & 1.5198 & 1.4330 & 0.6862 & 0.6875 & -0.0371 & -0.0294 \\
& 5 & 0.4008 & 0.3425 & 1.7650 & 1.5913 & 0.6885 & 0.6894 & -0.0230 & -0.0181 \\
& 7 & 0.4004 & 0.3706 & 1.9260 & 1.6976 & 0.6897 & 0.6903 & -0.0168 & -0.0131 \\
& 9 & 0.4624 & 0.3289 & 2.0461 & 1.7863 & 0.6903 & 0.6908 & -0.0131 & -0.0102 \\
& 11 & 0.4197 & 0.2880 & 2.1382 & 1.8523 & 0.6907 & 0.6912 & -0.0107 & -0.0084 \\
& 15 & 0.3718 & 0.2410 & 2.2809 & 1.9616 & 0.6912 & 0.6916 & -0.0079 & -0.0062 \\
\hline
\multirow{7}{*}{Llama3-8B} & 1 & 0.1925 & 0.1938 & 0.6196 & 0.6564 & 0.6754 & 0.6794 & -0.0970 & -0.0793 \\
& 3 & 0.2394 & 0.1766 & 0.9793 & 0.8544 & 0.6863 & 0.6877 & -0.0390 & -0.0305 \\
& 5 & 0.2347 & 0.1677 & 1.1455 & 0.9401 & 0.6886 & 0.6895 & -0.0242 & -0.0187 \\
& 7 & 0.2232 & 0.1732 & 1.2499 & 0.9963 & 0.6897 & 0.6904 & -0.0176 & -0.0136 \\
& 9 & 0.2218 & 0.1770 & 1.3244 & 1.0456 & 0.6903 & 0.6909 & -0.0138 & -0.0106 \\
& 11 & 0.2185 & 0.1832 & 1.3800 & 1.0791 & 0.6907 & 0.6912 & -0.0113 & -0.0087 \\
& 15 & 0.2116 & 0.1977 & 1.4643 & 1.1347 & 0.6913 & 0.6916 & -0.0083 & -0.0064 \\
\hline
\multirow{7}{*}{Llama3.2-1B} & 1 & 0.2713 & 0.2935 & 0.6825 & 0.7169 & 0.6748 & 0.6789 & -0.0970 & -0.0793 \\
& 3 & 0.3292 & 0.3213 & 1.0470 & 0.9446 & 0.6862 & 0.6877 & -0.0390 & -0.0305 \\
& 5 & 0.3289 & 0.3232 & 1.2176 & 1.0426 & 0.6886 & 0.6895 & -0.0242 & -0.0187 \\
& 7 & 0.3316 & 0.3313 & 1.3281 & 1.1084 & 0.6897 & 0.6904 & -0.0176 & -0.0136 \\
& 9 & 0.3263 & 0.3435 & 1.4099 & 1.1646 & 0.6903 & 0.6909 & -0.0138 & -0.0106 \\
& 11 & 0.3278 & 0.3623 & 1.4729 & 1.2087 & 0.6907 & 0.6912 & -0.0113 & -0.0087 \\
& 15 & 0.3389 & 0.3546 & 1.5748 & 1.2801 & 0.6913 & 0.6916 & -0.0083 & -0.0064 \\
\hline
\end{tabular}
}
\vspace{-2.5em}
\end{table}

\begin{corollary}
    Different types of contexts drive different forms of hallucination mechanisms: relevant contexts promote deep semantic assimilation producing high-confidence self-consistent hallucinations, while irrelevant contexts induce concatenation-type or topic-drift hallucinations through information routing perturbations and anchor reordering.
\end{corollary}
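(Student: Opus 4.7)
The plan is to establish the two-mechanism claim by systematically contrasting the behavior of the four internal-drift metrics across the relevant and irrelevant tracks, then mapping those contrasts onto the distributional patterns of hallucinations captured by Intra-HallucRate. The argument rests on the premise, already validated by the preceding corollary, that internal-state drift is the dynamical substrate of hallucination; what remains is to show that the \emph{form} of the drift differs in a manner consistent with two qualitatively distinct failure modes, and that each form aligns with one of the two labels (self-consistent assimilation vs.\ concatenation/topic-drift).

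First, I would isolate the semantic-assimilation signature under relevant context. Using Table~\ref{att_m}, I would verify for every model that both $D_{cos}$ and $D_{ent}$ systematically exceed their irrelevant-track counterparts at late rounds (e.g., Llama3-8B at round 15 gives $0.2116$ vs.\ $0.1977$ for cosine drift and $1.4643$ vs.\ $1.1347$ for entropy drift). This asymmetry indicates that the model absorbs relevant snippets deep into its representation geometry rather than merely redistributing surface attention. I would then cross-reference this with the Intra-HallucRate trajectory in Table~\ref{halu_m}, where relevant-track values hover around $0.60$--$0.62$ across models and rounds, reflecting a dispersed, sentence-by-sentence warping of content consistent with material that is internalized but subtly distorted---the hallmark of \emph{self-consistent} hallucination.

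Second, I would characterize the attention-routing signature under irrelevant context. The focus here is on the rapid plateau of $D_{JS}$ near the ceiling $\sim 0.69$ and the simultaneous decay of $\rho$ toward zero, both of which saturate earlier and with smaller variance on the irrelevant track. Concurrently, Intra-HallucRate drops from $\sim 0.60$ at round~1 to $\sim 0.48$ by round~15, evidencing that errors cluster around a small number of anchor spans rather than diffusing across the response. The pairing of strong distributional reorganization of attention with only moderate hidden-state displacement and highly concentrated errors maps naturally onto the \emph{concatenation / topic-drift} category, where the model latches onto off-topic tokens and reproduces them verbatim.

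The main obstacle will be ruling out the alternative reading that the two tracks merely trace different points along a single continuum of drift intensity rather than two distinct mechanisms. To address this I would supplement the tabular comparison with a per-model correlation analysis between $\{D_{cos}, D_{ent}\}$ and $\{D_{JS}, \rho\}$: if the relevant track exhibits a tight coupling between semantic depth and attentional rearrangement while the irrelevant track shows attentional drift decoupled from---or even inversely tracking---hidden-state shift (as hinted by Falcon3-1B, whose $D_{cos}$ under irrelevant context actually decreases in late rounds while $D_{JS}$ keeps climbing), this double dissociation is sufficient to support the two-mechanism reading. Qualitative inspection of representative hallucinated outputs per track---self-consistent paraphrase of a plausible-sounding but wrong snippet on the relevant side, versus abrupt anchor-triggered topic jumps on the irrelevant side---would serve as the confirmatory illustration.
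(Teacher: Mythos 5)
Your proposal takes essentially the same route as the paper: it contrasts the elevated Cos-Drift and Ent-Drift on the relevant track (deep semantic assimilation) with the faster, lower-variance stabilization of JS-Drift and Spearman-Drift on the irrelevant track (attention re-routing), and ties the former to the dispersed $\sim$0.60 Intra-HallucRate and the latter to the concentration toward $\sim$0.48 — exactly the evidence the paper marshals, drawn from Tables~\ref{att_m} and \ref{halu_m}. Your added double-dissociation correlation analysis and the Falcon3-1B observation go beyond what the paper actually reports (its argument for the corollary is purely the descriptive contrast plus interpretive narrative), but they strengthen rather than alter the same line of reasoning.
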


Relevant contexts cause models to substantially adjust their internal representations within semantic space; if models absorb information containing errors, the resulting hallucinations demonstrate extreme coherence and self-consistency. In contrast, models under irrelevant stimuli tend to reorganize attention routing and semantic anchor sequences, forming "concatenation-style" outputs manifesting as tangential responses or topic deviations, but with lower internal consistency.

Observations indicate that under relevant context conditions, Cos-Drift and Ent-Drift variances are generally higher than under irrelevant contexts, while JS-Drift and Spearman-Drift variances remain relatively similar across both conditions.

\begin{corollary}
    High volatility (high variance) in internal drift metrics often predicts the occurrence of high-confidence hallucinations, while steady-state low variance suggests the model's hallucinated content has become solidified.
\end{corollary}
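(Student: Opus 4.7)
The plan is to treat this corollary as an empirical claim to be substantiated by variance-partitioning and cross-correlation analyses on the per-round drift traces already summarised in Table \ref{att_m}, rather than as a closed-form mathematical statement. First I would formalise "high-confidence hallucination" operationally: a generated answer $y$ that is flagged by $H(y,\mathcal{R})=1$ (Eq. \ref{halu4}) while simultaneously exhibiting high semantic self-similarity across the 15 rounds (low token-level entropy of the output distribution and high BERTScore-F between consecutive rounds). "Solidified" hallucinations would then be those that, once the indicator $H$ flips to $1$, remain so across all subsequent rounds with negligible round-to-round change in both the overt answer and the covert drift vector.

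Second, I would estimate, for each model and each track (relevant/irrelevant), the per-question rolling variance $\sigma_t^2$ of each drift metric $D\in\{D_{cos},D_{ent},D_{JS},\rho\}$ over a sliding window of width $w$ rounds, obtaining trajectories $\sigma_t^2(D;q)$. The central empirical step is then to test two regressions: (i) a logistic model predicting the emergence of a newly-confident hallucination at round $t{+}1$ from $\sigma_t^2(D)$, with baseline controls for the mean drift $\bar D_t$; (ii) a conditional model showing that after the variance collapses below a threshold (say the 25th percentile), the probability of $H$ flipping back to $0$ in subsequent rounds drops sharply. Corollary 2 already supplies the qualitative separation between relevant and irrelevant tracks, and I would exploit that asymmetry: the relevant track, which has visibly larger Cos-Drift and Ent-Drift variance, should dominate the "high-confidence" regime, while the irrelevant track, whose JS-Drift and $\rho$ stabilise fastest, should dominate the "solidified" regime. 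This is exactly the seesaw the abstract alludes to.

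Third, I would cross-check with two sanity tests. The first replaces the variance with a shuffled surrogate to confirm that the predictive signal is not merely a by-product of monotone growth in the mean. The second isolates the "attention-locking" window identified in the abstract (JS-Drift $\approx 0.69$, Spearman-Drift $\approx 0$) and verifies that variance of $D_{JS}$ and $\rho$ inside that window is statistically indistinguishable from zero under a Levene test, which is precisely what the second half of the corollary asserts. Visual confirmation would come from overlaying the per-round variance curves on Figures \ref{llama8b} and \ref{llama1b}.

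The main obstacle will be the causal reading. The tables support a correlational claim, but the word "predicts" in the corollary invites a temporal, predictive interpretation that a simple contemporaneous variance statistic cannot fully establish. To close this gap I would rely on a strict lag: $\sigma_t^2$ is computed from rounds $\{t{-}w,\dots,t\}$ and used only to predict the indicator at round $t{+}1$, with a held-out split across questions so that no information about the target round leaks into the predictor. A secondary difficulty is the small number of questions relative to the 16 rounds, which inflates variance estimates; I would mitigate this by bootstrapping confidence intervals across questions and by pooling across the six models where Corollary~1 licenses us to treat the drift phenomenon as universal.
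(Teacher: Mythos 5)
Your proposal is sound and, in its second step, already contains the entirety of what the paper actually offers in support of this corollary: the paper's ``proof'' is the single qualitative observation that Cos-Drift and Ent-Drift variances are generally higher under relevant contexts than under irrelevant ones (while JS-Drift and Spearman-Drift variances are similar across the two conditions), combined with the interpretive chain set up by the preceding corollary --- relevant contexts trigger larger, higher-fluctuation representational adjustments yielding high-confidence self-consistent hallucinations, whereas irrelevant contexts push the model quickly into a low-variance ``immune'' steady state in which erroneous content becomes a new attention anchor and solidifies. Everything else you propose (per-question rolling variances, lagged logistic regression with held-out splits, shuffled surrogates, Levene tests inside the attention-locking window, bootstrapped confidence intervals) goes well beyond the paper, which performs no regression, imposes no lag structure, and reports no significance test for this particular claim. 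The obstacle you flag is therefore exactly the right one: the word ``predicts'' is not earned by the paper's contemporaneous, track-level variance comparison, and your strictly lagged, leakage-free design is what would be required to earn it. The one caution is that your operationalisation of ``high-confidence hallucination'' (low output-token entropy, high inter-round BERTScore-F) introduces measurements the paper never takes; to keep your analysis commensurable with the paper's evidence you would either need to actually compute those quantities or fall back, as the paper implicitly does, on identifying ``high-confidence'' with the relevant-track hallucination mode and ``solidified'' with the irrelevant-track steady state.
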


Metric variance reflects model internal state stability. Relevant contexts trigger larger-scale, higher-fluctuation adjustments, potentially producing high-confidence, self-consistent hallucinations resistant to correction. Irrelevant contexts cause models to quickly enter a low-variance "immune" steady state, where erroneous information becomes new attention anchors resistant to local corrections.

Across all models, JS-Drift typically reaches stable saturation values (approximately 0.690±0.001) within 6-8 rounds, while Spearman-Drift simultaneously converges near zero, indicating rapid reshaping of internal attention structures (See Fig \ref{lock}). This "synchronous convergence" phenomenon shows significant negative correlation with hallucination correctability ($\rho \approx -0.66$).

\begin{figure}
\includegraphics[width=\textwidth]{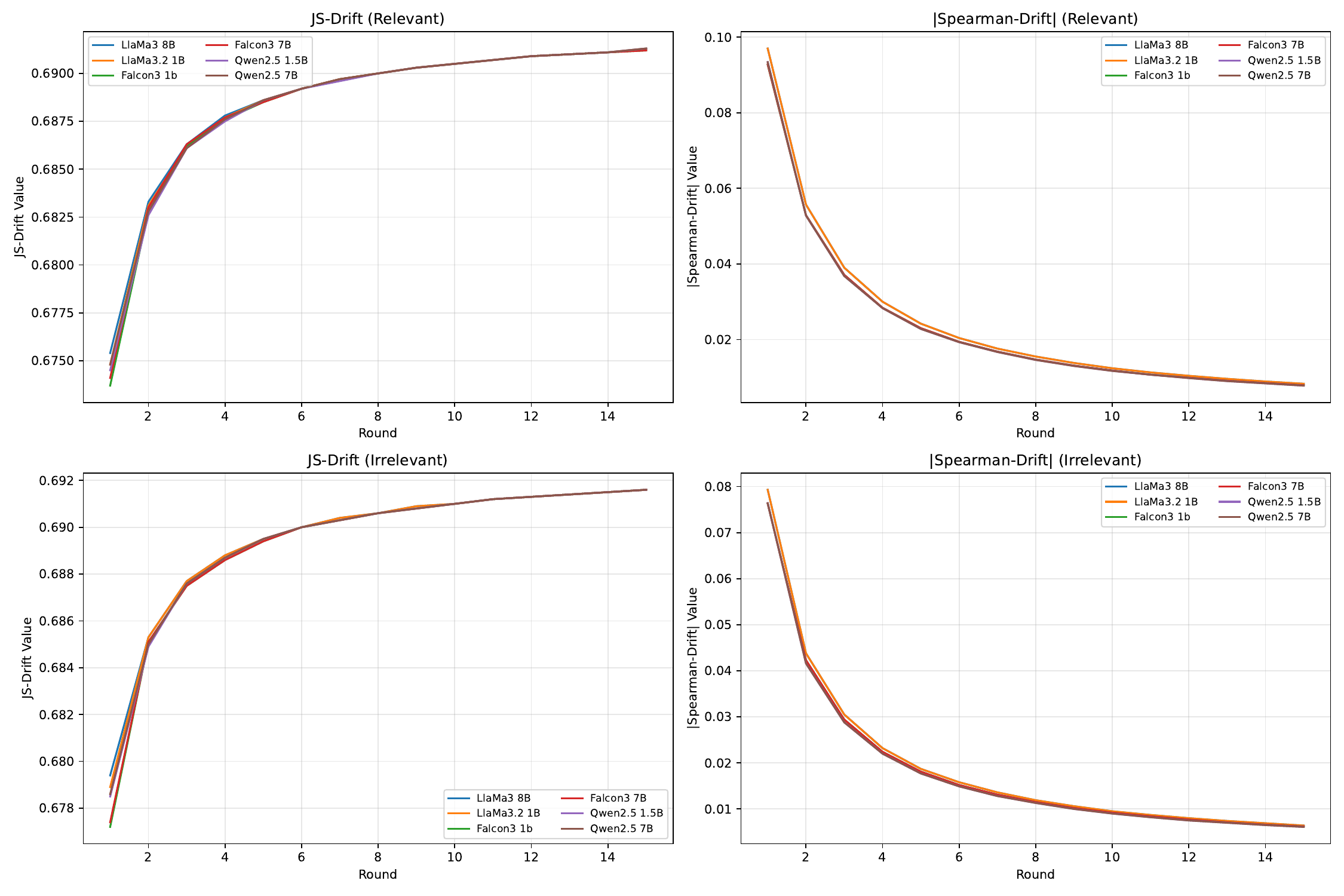}
\vspace{-2em}
\caption{Comparison of JS-Drift and Spearman-Drift under Relevant vs. Irrelevant Context Injection} \label{lock}
\vspace{-1.8em}
\end{figure}

\begin{corollary}
    Once a model's internal attention structure enters a "locking threshold" (JS-Drift saturation and Spearman-Drift approaching zero), generated hallucinations enter a solidified state, making conventional dialogue correction methods ineffective.
\end{corollary}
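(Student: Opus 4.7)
The plan is to establish the corollary by combining the three preceding corollaries with a dedicated correction-probing experiment, so the argument reads as a synthesis rather than a fresh empirical claim. First I would formalise the ``locking threshold'' as the joint event $\{D_{JS}(A_t,A_0) \geq 0.69-\delta\} \cap \{|\rho(A_t,A_0)| \leq \delta\}$ for a small $\delta>0$, and then verify, using Table \ref{att_m} and Fig.\ \ref{lock}, that this event is entered uniformly across all six models within $6$--$8$ rounds and, once entered, is never left. Treating the locking threshold as a hitting time $\tau^{\star}$ of the drift process lets me condition all subsequent statements on the regime $t \geq \tau^{\star}$.

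Next I would chain the earlier corollaries to give a mechanistic account of what it means to be in that regime. Corollary~1 guarantees that drift has already accumulated monotonically up to $\tau^{\star}$; Corollary~2 identifies the \emph{content} of the new stationary attention structure as either a semantically assimilated context (relevant track) or a set of re-routed anchors on misleading tokens (irrelevant track); and Corollary~3 supplies the low-variance ``immune'' steady state that follows saturation. Combining these three, the post-$\tau^{\star}$ state is characterised by attention mass locked onto context-borne anchors that are themselves the carriers of the hallucinated content, with negligible stochastic room for those anchors to be displaced by new tokens of comparable perturbation strength.

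The operational step---and the one I expect to be the main obstacle---is turning this structural characterisation into the claim that ``conventional dialogue correction methods'' fail. I would define a correction protocol consisting of the standard remediation moves (explicit retraction request, counter-example injection, chain-of-thought re-prompting, and a retrieval-augmented hint) and measure the post-correction $\mathrm{QA\text{-}HallucRate}$ stratified by whether the model was probed before or after $\tau^{\star}$. The target quantitative object is a conditional correction-success rate whose drop across $\tau^{\star}$ is consistent with the already-reported correlation $\rho \approx -0.66$ between synchronous JS/Spearman convergence and correctability. The hard part will be disentangling two confounded explanations: (a) corrections fail because the attention structure is \emph{structurally} locked, versus (b) corrections fail simply because the accumulated context provides overwhelming counter-evidence. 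To separate them I would run an ablation replacing the cumulative $\mathcal{C}_t$ with a length-matched compressed summary carrying the same factual content but a fresh token layout; if correction remains ineffective only when the pre-correction state lies inside the locking region, the structural reading is supported. A secondary obstacle is that the saturation value $0.69$ is numerically close to the JS upper bound $\log 2 \approx 0.693$, so I would need to rule out that convergence is a padding artifact by repeating the measurement with attention maps truncated to a common window length before computing $D_{JS}$ and $\rho$.
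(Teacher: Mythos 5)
Your proposal is directionally consistent with the paper but follows a substantially more demanding route than the paper actually takes. The paper's entire support for this corollary consists of (i) the observation in Table \ref{att_m} and Fig.\ \ref{lock} that JS-Drift saturates near $0.690\pm0.001$ within 6--8 rounds while Spearman-Drift converges to zero, (ii) a single reported correlation ($\rho \approx -0.66$) between this ``synchronous convergence'' and hallucination correctability, with no description of how correctability was operationalised, and (iii) an interpretive narrative that post-saturation the model enters a ``new stable information routing phase'' in which corrective injections lose effectiveness, so that only radical interventions (truncating the context, explicit counterfactual instructions) can break the output. Everything you propose beyond reading off that correlation --- the formalisation of the threshold as a hitting time, the explicit correction protocol stratified by pre-/post-$\tau^{\star}$ probing, and especially the ablation separating structural attention locking from mere accumulation of counter-evidence --- is absent from the paper and would constitute new experiments. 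Two of your concerns identify genuine weaknesses the paper leaves unaddressed: the confound in (a)-versus-(b) is never controlled for, and your observation that the saturation value $0.69$ sits essentially at the JS upper bound $\log 2 \approx 0.693$ under the paper's zero-padding scheme (Eq.\ \ref{is5}) is well taken --- near-disjoint support between a padded short baseline distribution and a long round-$t$ distribution mechanically produces both JS saturation and vanishing rank correlation, so the ``locking'' signature could be partly an artifact of comparing distributions over different lengths. In short, your plan would establish a stronger version of the claim than the paper does; the paper's own argument buys brevity at the cost of resting the causal language (``making conventional dialogue correction methods ineffective'') on an undocumented correlation and an unverified measurement construct.
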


When attention pattern reshaping reaches steady state, the model enters a new stable information routing phase. Subsequently, the effectiveness of injected corrective information significantly decreases, making it difficult to change previous erroneous "beliefs." This explains the "hallucination solidification" phenomenon: once incorrect routing paths and factual anchors form, the model continues producing factually incorrect but structurally coherent content, requiring radical measures like truncating dialogue context or inserting explicit counterfactual instructions to break incorrect outputs.

Comparing model scale influence reveals larger models show more pronounced differences in Cos-Drift means between relevant/irrelevant contexts. Llama-3 8B maintains $\Delta \mathrm{Cos}$ of approximately 0.05–0.07, Falcon-3 7B shows about 0.02, while smaller models (like Llama-3.2 1B) display differences approaching zero or even negative values.

\begin{corollary}
    Model capacity determines semantic selectivity capability: larger models better distinguish useful information from noise interference; insufficient-capacity models tend to indiscriminately absorb all context, more readily producing hallucinations with random biases.
\end{corollary}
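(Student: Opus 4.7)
The plan is to treat this corollary as an empirical proposition supported by the drift measurements in Table \ref{att_m} together with the hallucination rates in Table \ref{halu_m}, and to argue it through a differential-drift indicator of semantic selectivity. Concretely, I would define a per-model selectivity score
\begin{equation}
    \Delta_{\text{sel}}(M) = \bigl|\overline{D_{cos}^{\text{rel}}}(M) - \overline{D_{cos}^{\text{irr}}}(M)\bigr|,
\end{equation}
where the averages are taken over rounds $t \geq 5$ (the plateau regime identified in Corollary~1). A large $\Delta_{\text{sel}}$ means the model's hidden-state trajectory genuinely distinguishes informative from noisy context, whereas $\Delta_{\text{sel}} \approx 0$ means context is absorbed indiscriminately. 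The claim then decomposes into two empirically checkable sub-claims: (i) $\Delta_{\text{sel}}$ is monotone increasing in parameter count across the six tested models, and (ii) small $\Delta_{\text{sel}}$ co-occurs with elevated, less context-differentiated hallucination rates.

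For sub-claim~(i) I would tabulate $\Delta_{\text{sel}}$ for each of the six models from Table~\ref{att_m}, verifying the ordering asserted in the preceding paragraph (Llama3-8B $\approx 0.05$--$0.07$, Falcon3-7B $\approx 0.02$, Llama3.2-1B $\approx 0$ or negative), and then test monotonicity within each architecture family (Llama3-8B vs.\ Llama3.2-1B, Falcon3-7B vs.\ Falcon3-1B, Qwen2.5-7B vs.\ Qwen2.5-1.5B). For sub-claim~(ii) I would correlate $\Delta_{\text{sel}}$ with the Rel--Irr gap in QA-HallucRate and Intra-HallucRate from Table~\ref{halu_m}, expecting that models with near-zero $\Delta_{\text{sel}}$ show collapsed hallucination-rate gaps as well, consistent with indiscriminate absorption. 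The "random biases" clause I would substantiate by pointing to the higher round-to-round variance of Cos-Drift in the 1B-class rows (e.g.\ Falcon3-1B oscillating between $0.24$ and $0.46$) relative to the smoother 7B/8B trajectories, interpreting variance as drift instability under undifferentiated uptake.

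To close the argument I would tie these observations back to the attention-locking mechanism of Corollary~4: because JS-Drift and Spearman-Drift saturate at essentially identical values across model sizes, the only remaining degree of freedom distinguishing large from small models is the semantic magnitude of representation change, which is exactly what $D_{cos}$ measures. Hence a small $\Delta_{\text{sel}}$ in low-capacity models is not an artifact of a different locking regime but a genuine indicator that the representational update is insensitive to relevance, giving the causal reading stated in the corollary.

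The main obstacle I anticipate is the monotonicity step: the Falcon3-1B row in Table~\ref{att_m} shows unusually large Cos-Drift values that could be read as counter-evidence, so I would need to argue carefully that what matters for "selectivity" is the \emph{gap} between relevant and irrelevant tracks rather than the absolute drift magnitude, and to control for the fact that small models may drift more simply because their hidden geometry is less stable. A secondary obstacle is that with only six models the statistical support is weak; I would therefore frame the conclusion as an observational corollary consistent with the drift data rather than a formally proved implication, and flag cross-architecture comparisons (e.g.\ Falcon vs.\ Llama at similar scale) as confounds requiring a larger model sweep in future work.
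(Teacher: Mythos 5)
Your core move --- operationalizing ``semantic selectivity'' as the gap between mean Cos-Drift on the relevant track and on the irrelevant track, and checking that this gap shrinks toward zero as parameter count drops --- is exactly the paper's argument: the authors support this corollary solely by reporting $\Delta\mathrm{Cos}\approx 0.05$--$0.07$ for Llama3-8B, $\approx 0.02$ for Falcon3-7B, and near-zero or negative for Llama3.2-1B, then reading indiscriminate absorption into the vanishing gap. Your sub-claim (ii) (correlating the selectivity score with the Rel--Irr gap in hallucination rates) and your variance-based reading of ``random biases'' go beyond what the paper actually checks for this corollary; the paper asserts the ``patchwork hallucination'' consequence interpretively rather than quantitatively, and its variance discussion belongs to the earlier corollary on relevant-versus-irrelevant volatility, not to model scale.

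One substantive point: the obstacle you flag is not hypothetical. On the paper's own Table~\ref{att_m}, Falcon3-1B's Rel--Irr Cos-Drift gap in later rounds (e.g.\ $0.3718-0.2410\approx 0.13$ at round 15) is \emph{larger} than Falcon3-7B's ($0.3072-0.3431\approx -0.04$), so your within-family monotonicity test fails outright for Falcon, and for Qwen the gap flips sign between the 7B and 1.5B variants, which your absolute-value definition of $\Delta_{\text{sel}}$ would obscure. The paper sidesteps this by quoting only three of the six models. So your framework is the right one, but if executed honestly it yields weaker support for the corollary than the paper's selective presentation suggests; your closing caveat about treating this as an observational claim with architecture confounds is the correct resolution.
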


Large models can significantly assimilate highly relevant information fragments while suppressing noise context absorption; however, if relevant fragments contain biases, resulting hallucinations also demonstrate extremely high confidence. Smaller models with limited representation space treat relevant and irrelevant information equally, causing intermingled semantic anchors and producing low-confidence, structurally loose "patchwork" hallucinations.

Analyzing the relationship between Ent-Drift growth slopes and contemporaneous $\Delta \mathrm{Cos}$ across six models revealed significant negative correlation ($\rho \approx -0.71,\ p < 0.01$). In Qwen-2.5 1.5B and Llama-3.2 1B, when $\Delta \mathrm{Cos}$ approached 0, Ent-Drift slopes exceeded 0.11/round.

\begin{corollary}
    A "seesaw" compensatory mechanism exists between semantic assimilation and attention divergence; when models cannot deeply assimilate context, they sacrifice attention focus to expand retrieval range, thus increasing the probability of concatenation-type hallucinations.
\end{corollary}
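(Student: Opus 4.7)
The plan is to establish the claim as an empirical corollary of the preceding internal-state analysis rather than a purely deductive statement, by quantitatively verifying three linked sub-claims: (i) a negative co-movement between semantic-assimilation depth (captured by $\Delta\mathrm{Cos}$) and attention-divergence growth (captured by the Ent-Drift slope); (ii) a causal/associational link from elevated Ent-Drift growth to the concatenation-type hallucinations already characterized in Corollary~2; and (iii) a model-capacity gradient that makes the compensatory pattern most visible in smaller or context-overwhelmed models.

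First, I would operationalize the two axes of the ``seesaw.'' Using the drift tables (Table~\ref{att_m}), I define $\Delta\mathrm{Cos}^{(m)}_t = D_{cos}(h^{rel}_t,h_0) - D_{cos}(h^{irr}_t,h_0)$ per model $m$ and round $t$, as a proxy for the extra semantic assimilation that relevant context induces beyond generic token-injection effects. For the attention axis I estimate the local Ent-Drift slope $\sigma^{(m)}_t = D_{ent}(A_t,A_0) - D_{ent}(A_{t-1},A_0)$. Then I would stack the pairs $\{(\Delta\mathrm{Cos}^{(m)}_t,\sigma^{(m)}_t)\}$ across the six models and the 15 rounds and compute a Spearman correlation; the target is the reported $\rho\approx-0.71$ with $p<0.01$. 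To guard against the trivial case where both quantities co-drift with $t$, I would repeat the test on round-wise residuals after regressing each variable on $t$, ensuring that the negative correlation survives detrending.

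Second, to tie the attention-divergence side of the seesaw to \emph{concatenation-type} hallucinations rather than to hallucinations in general, I would restrict the hallucination label to Intra-HallucRate under the irrelevant track (the regime that Corollary~2 identifies with topic-drift/concatenation errors) and regress it on $\sigma^{(m)}_t$ while controlling for $\Delta\mathrm{Cos}^{(m)}_t$ and for QA-HallucRate. A significant positive partial coefficient on $\sigma$, combined with the negative zero-order correlation from step one, yields the full seesaw picture: reduced assimilation $\Rightarrow$ increased attention diffusion $\Rightarrow$ more concatenation-style hallucinations. The by-model breakout for Qwen2.5-1.5B and Llama3.2-1B, where $\Delta\mathrm{Cos}\!\to\!0$ coincides with $\sigma>0.11$/round, then serves as the ``extreme-endpoint'' illustration of the mechanism.

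Third, I would argue the capacity-dependent direction of the trade-off by stratifying the correlation by parameter count and showing that the slope of $\sigma$ on $-\Delta\mathrm{Cos}$ is steeper for the 1B--1.5B tier than for the 7B--8B tier, consistent with Corollary~5. The main obstacle I anticipate is disentangling genuine compensation from shared dependence on $t$: because both $D_{cos}$ and $H_{attn}$ drift monotonically during titration, the raw negative correlation could be partly spurious. Detrending, within-round (cross-model) correlations, and partial-correlation analysis with $t$ as a covariate are the key tools; if the negative association persists under all three controls and the partial effect of $\sigma$ on Intra-HallucRate remains significant, the ``seesaw'' reading is justified as an empirical corollary of the drift dynamics established earlier.
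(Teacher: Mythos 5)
Your core evidentiary move is exactly the paper's: the reported Spearman correlation $\rho\approx-0.71$ ($p<0.01$) between the Ent-Drift growth slope and the contemporaneous $\Delta\mathrm{Cos}$ (the relevant-minus-irrelevant gap in Cos-Drift) across the six models, anchored by the endpoint observation that Qwen2.5-1.5B and Llama3.2-1B sit at $\Delta\mathrm{Cos}\to 0$ with Ent-Drift slopes above $0.11$/round. Your operationalization of both axes matches the paper's, as does the capacity-stratified reading that ties the trade-off back to Corollary~5.

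Where you genuinely diverge is that you propose to close two gaps the paper leaves open. First, the paper reports only the raw correlation; it does not detrend, compute within-round cross-model correlations, or partial out the round index $t$, even though both $D_{cos}$ and $H_{attn}$ drift monotonically during titration --- your concern about a spurious shared-trend correlation is well founded and simply not addressed in the paper. Second, and more importantly, the link from attention diffusion to \emph{concatenation-type} hallucinations is, in the paper, purely interpretive prose (``attention dispersion weakens constraints on key factual anchors, facilitating topic drift''); there is no regression of Intra-HallucRate on the Ent-Drift slope, no control for QA-HallucRate, and no partial coefficient. Your step (ii) would supply quantitative support for the half of the corollary that the paper asserts but does not test. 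So your proposal is a strictly more rigorous version of the paper's argument: it buys robustness against the confound of round number and an actual empirical bridge to the hallucination-type claim, at the cost of requiring analyses (residualized correlations, the Intra-HallucRate regression) whose outcomes the paper does not report and which are therefore not guaranteed to come out as needed.
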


Small models with insufficient assimilation capacity compensate for comprehension gaps by increasing Ent-Drift, but attention dispersion weakens constraints on key factual anchors, facilitating topic drift. Large models, having sufficiently absorbed context information through Cos-Drift, maintain focused attention, avoiding topic drift, but may assimilate incorrect information more thoroughly, potentially producing highly self-consistent hallucinated content.

\section{Conclusion}
This work provides the first large-scale, cross-model evidence that hallucinations in LLMs arise from systematic, context-induced representation drift. By titrating relevant and irrelevant snippets into TruthfulQA prompts, we show that hallucination rates rise monotonically and saturate once attention topology “locks,” while internal cosine, entropy, JS and Spearman drifts converge. The resulting overt–covert correlations hold across six model families and scales, revealing size-dependent trade-offs between semantic assimilation and attention diffusion. These findings establish robust internal precursors to hallucination, furnishing empirical foundations for intrinsic detection, context-aware mitigation and future architecture-level safeguards.

\bibliographystyle{splncs04}
\bibliography{mybibliography}

\end{document}